\DeclareMathOperator*{\maximize}{maximize}				% maximize
\DeclareMathOperator*{\st}{subject\,to}					% subject to
\newcommand{\mat}[1]{\ensuremath{\begin{bmatrix}#1\end{bmatrix}}}	% matrix
\newcommand{\dq}[0]{\ensuremath{\dot{q}}}					% dq
\newcommand{\Xs}[0]{\ensuremath{\mathcal{X}}}						% dX
\newcommand{\Us}[0]{\ensuremath{\mathcal{U}}}						% dU
\newcommand{\Vs}[0]{\ensuremath{\mathcal{V}}}						% dV
\newcommand{\DX}[0]{\ensuremath{\partial \mathcal{X}}}						% dX
\newcommand{\DV}[0]{\ensuremath{\partial \mathcal{V}}}						% dV
\newcommand{\DDQ}[0]{\ensuremath{\partial \dot{\mathcal{Q}}}}						% dQ_dot
\newcommand{\intV}[0]{\ensuremath{\mathrm{int}(\mathcal{V})}}					% int(V)
\newcommand{\T}[0]{\ensuremath{\top}}							% transpose symbol
\newcommand{\pinv}[0]{\ensuremath{\dagger}}					% pseudoinverse symbol
\newcommand{\Rv}[1]{\ensuremath{\mathbb{R}^{#1}}}				% set of real-valued vectors
\newcommand{\R}[2]{\ensuremath{\mathbb{R}^{#1\times #2}}}		% set of real-valued matrices
\newcommand{\Expect}{{\rm I\kern-.3em E}}				% expectation
\newtheorem{theorem}{Theorem}
\newtheorem{lemma}{Lemma}
\newtheorem{corollary}{Corollary}
\newtheorem{assumption}{Assumption}
\renewenvironment{proof}[1][\proofname]{\par
\pushQED{\qed}%
\normalfont \topsep6\p@\@plus6\p@\relax
\trivlist
\item\relax
{\itshape
#1\@addpunct{.}}\hspace\labelsep\ignorespaces
}{%
\popQED\endtrivlist\@endpefalse
}
\newcommand{\rev}[1]{#1}
\newcommand{\asia}[1]{#1}
\title{\LARGE \bf
% Data-Efficient Learning of the Viability Kernel of a Robot Manipulator
VBOC: Learning the Viability Boundary of a Robot Manipulator using Optimal Control
}
\author{Asia La Rocca$^{1}$, Matteo Saveriano$^{1}$ and Andrea Del Prete$^{1}$% <-this % stops a space
% \thanks{*This work was not supported by any organization}% <-this % stops a space
\thanks{Manuscript received: April 28, 2023; Revised: July 31, 2023; Accepted: August 23, 2023.}
\thanks{This paper was recommended for publication by Editor Jens Kober upon evaluation of the Associate Editor and Reviewers’ comments.}
\thanks{$^{1}$The authors are with the Industrial Engineering Department, University of Trento, Via Sommarive 11, 38123, Trento, Italy. {\tt \{name.surname\}@unitn.it}}%
\thanks{This work has been partially supported by the PRIN project DOCEAT (CUP n. E63C22000410001) and the European Union under the NextGenerationEU project iNest (ECS 00000043).}
\thanks{Digital Object Identifier (DOI): see top of this page.}
}
\begin{document}

\maketitle

%%%%%%%%%%%%%%%%%%%%%%%%%%%%%%%%%%%%%%%%%%%%%%%%%%%%%%%%%%%%%%%%%%%%%%%%%%%%%%%%
\begin{abstract}
Safety is often the most important requirement in robotics applications. 
Nonetheless, control techniques that can provide safety guarantees are still extremely rare for nonlinear systems, such as robot manipulators. 
A well-known tool to ensure safety is the viability kernel, which is the largest set of states from which safety can be ensured.
Unfortunately, computing such a set for a nonlinear system is extremely challenging in general. 
Several numerical algorithms for approximating it have been proposed in the literature, but they suffer from the curse of dimensionality.
This paper presents a new approach for numerically approximating the viability kernel of robot manipulators. 
Our approach solves optimal control problems to compute states that are guaranteed to be on the boundary of the set. 
This allows us to learn directly the set boundary, therefore learning in a smaller dimensional space. 
Compared to the state of the art on systems up to dimension 6, our algorithm resulted to be more than 2 times as accurate for the same computation time, or 6 times as fast to reach the same accuracy.
\end{abstract}

%%%%%%%%%%%%%%%%%%%%%%%%%%%%%%%%%%%%%%%%%%%%%%%%%%%%%%%%%%%%%%%%%%%%%%%%%%%%%%%%
\section{INTRODUCTION}
%The introduction is very important as well. We should cite some papers like this~\cite{Homer2020}.

The computation of viability kernels is a topic of great importance in the field of safe control of constrained dynamical systems. The viability kernel is the set of states from which a dynamical system can remain within a predefined set of safe states. Knowing the viability kernel, it is straightforward to design safe controllers; it is therefore a very powerful tool for safety-critical applications.
Unfortunately, except for the case of linear dynamics and linear constraints~\cite{Blanchini1999, DelPrete2018}, computing these sets is extremely challenging.

The classical method for the computation of these sets, known as the Viability Kernel Algorithm~\cite{Saintpierre1994}, consists in gridding the state space and approximating the viability kernel using recursive inclusions. Due to the grid-based discretization of the state space that is performed, the complexity and memory requirements of this algorithm scale exponentially with respect to the state dimension (problem known as the curse of dimensionality). The problem has then been approached with many different tools such as viability theory~\cite{Aubin1991}, theory of barriers~\cite{Russwurm2021}, approximate dynamic programming~\cite{Coquelin2007}, or simulated annealing~\cite{Bonneuil2006}. 

One of the most relevant directions is given by the approaches based on reachability analysis, exploiting the connection between viability kernels and reachable sets~\cite{Mitchell2007}. Reachability analysis consists in inferring the set of all states that are backward/forward reachable by a constrained dynamical system from a given target/initial set of states. For nonlinear systems, examples are given in~\cite{Monnet2016,Bravo2003,Zanolli2020}, using interval arithmetic, or in~\cite{Coquelin2007,Jiang2016,Rubies2016}, using dynamic programming (level set approaches based on the solution of the Hamilton-Jacobi PDE). These algorithms try to accurately define the set boundary, but they suffer from the curse of dimensionality, restricting their application to small systems.

In recent years, an improvement in the field has been brought by function approximators, such as Neural Networks (NNs), which allow to represent complex sets in a more memory-efficient way with respect to gridding~\cite{Jiang2016,Rubies2016,Djeridane2006}. These algorithms require less memory to run and store the resulting approximation, which represented one of the main bottlenecks of previous approaches.

\qting{quote:qlearningref}{\rev{Another recent trend consists in the use of Reinforcement Learning, as in~\cite{Hsu2021}, where the authors exploit Q-learning to compute reach-avoid sets. This method allows to compute a safe under-approximation of these sets, but its applicability is still limited due to the need to discretize the action space.}}

A promising approach consists then in data-driven methods that rely on the feasibility of Optimal Control Problems (OCPs). In~\cite{Zhou2020} the authors approximated the forward invariant region of nonlinear systems using Support Vector Machines (SVMs). Their algorithm solves OCPs for initial conditions well distributed over the state space, and uses the feasibility results to train a classifier. This approach is applicable to a wide range of nonlinear systems, but it becomes intractable for large state spaces, since it requires too many samples to get good approximations. 

A promising way to reduce the computational burden of data-driven methods is Active Learning (AL), a machine learning framework to iteratively select new data samples that are the most informative or representative. In a first study~\cite{Chapel2006}, the authors proposed an iterative algorithm to select the points nearest to the frontier of the learned SVM classifier to improve its accuracy. However, the approach is still hardly scalable because the number of samples in large dimensions can still grow exponentially; moreover, nonlinear SVMs training complexity scales more than quadratically with the number of samples.

%The advantage of data-driven approaches is that they allow to infer the control-invariant set without resulting in over-conservative estimations, but the currently available methods are still

%More advanced algorithms for Active Learning applied to sets estimation can be found in~\cite{Chakrabarty2019,Chakrabarty2020} where the authors proposed different approaches for reachable sets approximations. Even if data-driven methods have the potential

Overall, the computation of viability kernels remains an active area of research, with ongoing efforts to develop more efficient and scalable algorithms. 
In this paper we propose a new approach for the approximation of viability kernels of robot manipulators. Instead of computing the set boundary by iteratively approaching it (as in Active Learning or Hamilton-Jacobi methods), we directly compute states that are \emph{exactly} on the boundary. We then use these states to train an NN that approximates the set. The main advantage of this approach is that it requires significantly less samples than other data-driven methods. % Moreover, it does not need to rely on the identification of unfeasible optimization problems, which a solver may fail to solve because of numerical issues rather than because of unfeasibility of the problem, thus leading to incorrect results. 
While our approach is tailored to robot manipulators (or any fully-actuated multi-body system), many of our theoretical results hold for any smooth dynamical system.
Our tests on systems with 2, 4 and 6 dimensional states show that it leads to faster and more accurate approximations than state-of-the-art approaches.

\section{PRELIMINARIES}

\subsection{Notation} \label{ssec:notation}
\begin{itemize}
    \item \DV\ denotes the boundary of the set \Vs ;
    % \item \DV\ is the boundary of the set \Vs ;
    \item \intV\ denotes the interior of the set \Vs ;
    \item $\Xs \setminus \Vs$ denotes the set difference between \Xs\ and \Vs ; 
    % \item $\DV \setminus \DX$ denotes the set difference between \DV and \DX ; 
    \item $\{ x_i \}_0^N$ denotes a discrete-time trajectory given by the sequence $(x_0, \dots, x_N)$;
    \item \rev{$x^+$ denotes the next state, whenever $x$ is used to denote the current state.}
\end{itemize}

\subsection{Problem statement}
Let us consider a discrete-time dynamical system with state and control constraints:
\begin{equation}
    x_{i+1} = f(x_i, u_i),
\qquad
    x \in \Xs \rev{\subset \Rv{n}}, \qquad u \in \Us \rev{\subset \Rv{m},}
\end{equation}
\qting{quote:X_and_U}{\rev{where \Xs\ and \Us\ are the closed and bounded sets of feasible states and control inputs.}
}
Our goal is to compute a numerical approximation of the \emph{viability kernel} \Vs , which is the subset of \Xs\ starting from which it is possible to keep the state in \Xs\ indefinitely. Mathematically, we can define \Vs\ as:
\begin{equation}
\Vs \triangleq \{ x_0 \in \Xs \, | \, \exists \{u_i\}_{0}^{\infty}: x_i \in \Xs, u_i \in \Us, \forall \, i=0,\dots,\infty \} .
\end{equation}
\rev{In the following, we assume that \Vs\ is closed.}

\begin{assumption} \label{ass:dyn_differentiability}
We assume that $f(\cdot)$ be differentiable with respect to $x$, which implies:
\begin{equation}
|| \mathrm{eig}(\partial_x f(x, u)) || < \infty \qquad \forall \, x \in \Xs, u \in \Us,
\end{equation}
where the function $\mathrm{eig}(\cdot)$ returns the eigenvalues of the given matrix.
\end{assumption}
\qting{quote:differentiability}{
\rev{Assumption~\ref{ass:dyn_differentiability} implies that our method cannot handle nonsmooth systems, such as a robot that makes contact with a perfectly rigid environment. 
However, contacts can sometimes be modeled as visco-elastic, recovering then differentiability.}} %Whether visco-elastic models can be used depends on various factors, such as the contact stiffness and the time step of the chosen numerical integrator.

\subsection{Backward Reachability vs Viability}
\label{ssec:back_reach_vs_viab}
Our goal is to approximate \Vs , which is the largest control-invariant set~\cite{Blanchini1999} (i.e., a set inside which you can remain indefinitely). 
However, for many applications (e.g., ensuring recursive feasibility of MPC~\cite{Hewing2020a}, or designing a safety filter for a Reinforcement Learning algorithm~\cite{Wabersich2019}) we could settle for just any sufficiently large control-invariant set.
Control-invariant sets can be computed using N-step backward reachability, i.e., computing the set of points from which a given set can be reached in $N$ steps.
In certain cases, backward reachability can also be used to compute \Vs , and in the following we assume that this is the case.
\begin{assumption} \label{ass:back_reach}
Let us define $\mathcal{S}$ as the set containing all the equilibrium states of our system:
\begin{equation}
    \mathcal{S} = \{ x \in \Xs\ | \, \exists \, u \in \Us : x = f(x,u) \}.
\end{equation}
We assume that the $\infty$-step backward reachable set of $\mathcal{S}$ is equivalent to \Vs .
In other words, that a state is viable if and only if from that state you can reach an equilibrium state.
\end{assumption}
We argue that this assumption is satisfied for most robot manipulators, which are our main focus. 
However, even if this assumption were not satisfied, our approach could still be used to compute control-invariant sets. 

\subsection{\rev{Data-Driven Learning}}
\label{ssec:active_learning}
%A classic approach for the computation of reachable sets consists in the solution of the Hamilton-Jacobi PDE through dynamic programming. 
\rev{A state-of-the-art approach to numerically approximate backward reachable sets is to sample states $x^{sample} \in \Xs$ and verify whether, from there, it is possible to reach the target set~\cite{Djeridane2008}, $\mathcal{S}$ for our application.} This can be done by solving an OCP like the following one:
\begin{equation} \label{eq:ocp_classic}
\begin{aligned}
    \maximize_{ \{x_i\}_{0}^N, \{u_i\}_{0}^{N-1}} \, & 1 \\
    \st \,\, & x_{i+1} = f(x_i, u_i) \quad \forall \, i=0,\dots,N-1 \\
    & x_i \in \Xs, \, u_i \in \Us    \quad \forall \, i=0,\dots,N-1 \\
    & x_0 = x^{sample} \\
    & x_N = x_{N-1}
\end{aligned}
\end{equation}
where $N \in \mathbb{N}$ is the time horizon, which must be sufficiently large to allow the system to reach, if possible, an equilibrium state from $x^{sample}$. 
If a solution of this problem is found, then we know that $x^{sample} \in \Vs$, and the whole state trajectory is in \Vs . 
Otherwise, we can assume that $x^{sample} \not \in \Vs$, even though this is not necessarily the case because a solution may exist even if the solver was unable to find one. While potentially impactful, this issue is typically neglected by assuming that the solver can find a solution if one exists.
This information is then used to train a classifier (e.g., SVM or NN) to distinguish viable and non-viable states.

This approach scales badly because it requires a dense sampling of \Xs\ to accurately approximate \Vs . To reduce complexity, it has been coupled with Active Learning (AL), a technique to choose the most informative or representative values of $x^{sample}$. \rev{A first study on the application of AL for the computation of viability kernels was done in~\cite{Chapel2006}, where the authors proposed an algorithm based on iteratively testing the nearest points to the frontier of the currently learned SVM classifier. 
More advanced AL algorithms for reachable sets approximations can be found in~\cite{Chakrabarty2019,Chakrabarty2020}.}

Instead of \rev{ iteratively approaching \DV}, the next section presents an approach to directly compute states that are \emph{exactly} on \DV .

\section{Viability-Boundary Optimal Control (VBOC)} 
\label{sec:vboc}
To find states that are \emph{exactly} on \DV , we solve a modified version of OCP~\eqref{eq:ocp_classic}, where the initial state $x_0$ is not completely fixed, but it is optimized through a cost function:
\begin{equation} \label{eq:viab_boundary_ocp}
\begin{aligned}
    \maximize_{\{x_i\}_{0}^N, \{u_i\}_{0}^{N-1}} \, & a^\T x_0 \\
    \st \,\, & x_{i+1} = f(x_i, u_i) \quad \forall \, i=0,\dots,N-1 \\
    & x_i \in \Xs, \, u_i \in \Us    \quad \forall \, i=0,\dots,N-1 \\
    & S x_0 = s \\
    & x_N = x_{N-1},
\end{aligned}
\end{equation}
where $a \in \Rv{n}$ is the cost vector, $S \in \R{n_s}{n}$ and $s \in \Rv{n_s}$ are \rev{the initial} constraint matrix and vector, \rev{with $n_s$ being their size}. 
\qting{quote:explain_S}{
\rev{While the role of $a$ is straightforward, the use of $S$ and $s$ to partially constrain $x_0$ will become clear in Section~\ref{ssec:data_distribution}.}
}
\qting{quote:explain_lemma_1}{Let us now prove that the initial state of any locally-optimal solution of~\eqref{eq:viab_boundary_ocp} is on \DV . }
% Moreover, exploiting Lemma~\ref{lemma:boundary_V} we can design a simple algorithm to verify whether other parts of the optimal state trajectory of~\eqref{eq:viab_boundary_ocp} belong to \DV .

\begin{lemma} \label{lemma:viab_boundary_ocp}
Let us consider a locally-optimal state trajectory $\{x_i^* \}_{0}^N$ computed by solving~\eqref{eq:viab_boundary_ocp}. 
Let us assume that $P_S a \neq 0$, where $P_S \triangleq (I - S^\pinv S)$ is a null-space projector of $S$. 
Then, if $N$ is sufficiently large to allow reaching $\mathcal{S}$ from any viable state, we have: 
$$
x_0^* \in \DV .
$$
Moreover, for any sufficiently small value $\epsilon > 0$:
$$
\tilde{x}_0 \triangleq x_0^* + \epsilon P_S a \notin \Vs .
$$
\end{lemma}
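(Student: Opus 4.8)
The plan is to prove the two displayed claims in reverse order: first show that $\tilde{x}_0 \notin \Vs$ for all sufficiently small $\epsilon>0$, and then deduce $x_0^*\in\DV$ from this together with the fact that $x_0^*$ is itself viable. Three quick facts set things up. (i) $x_0^*\in\Vs$: the terminal constraint $x_N^*=x_{N-1}^*$ and the dynamics give $f(x_{N-1}^*,u_{N-1}^*)=x_{N-1}^*$, so $x_{N-1}^*\in\mathcal S$; padding the optimal input sequence with $u_{N-1}^*$ repeated forever keeps the state in \Xs\ indefinitely, so by the assumption that viability is equivalent to reachability of $\mathcal S$, $x_0^*\in\Vs$. (ii) The direction $P_S a$ preserves the initial constraint and strictly improves the cost: since $SS^\pinv S=S$ we have $SP_S a=(S-SS^\pinv S)a=0$, hence $S\tilde{x}_0=Sx_0^*=s$; and since $P_S=P_S^\T=P_S^2$, $a^\T P_S a=\|P_S a\|^2>0$ because $P_S a\neq 0$, so $a^\T\tilde{x}_0=a^\T x_0^*+\epsilon\|P_S a\|^2>a^\T x_0^*$. (iii) Because $N$ is large enough to reach $\mathcal S$ from any viable state, the set of feasible initial states of~\eqref{eq:viab_boundary_ocp} is exactly $\Vs\cap\{x:Sx=s\}$.

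The core step is a contradiction argument. Fix a small $\epsilon>0$ and suppose $\tilde{x}_0\in\Vs$. By (ii) and (iii), $\tilde{x}_0$ is a feasible initial state of~\eqref{eq:viab_boundary_ocp}, so it admits \emph{some} feasible trajectory; using Assumption~\ref{ass:dyn_differentiability} (differentiability of $f$, hence locally Lipschitz dependence of trajectories on the initial state and inputs), I would refine this to a feasible trajectory contained in a prescribed neighborhood of $\{x_i^*\}_0^N$ in decision-variable space: propagating the optimal inputs $\{u_i^*\}$ from $\tilde{x}_0$ yields states within $O(\epsilon)$ of $\{x_i^*\}$ by a discrete Gr\"onwall-type bound, and a small terminal correction then steers the trajectory exactly onto the equilibrium $x_{N-1}^*$, where it can dwell so that $x_N=x_{N-1}$. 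This trajectory is feasible, lies near $\{x_i^*\}$, and by (ii) has cost $a^\T\tilde{x}_0>a^\T x_0^*$ — contradicting local optimality of $\{x_i^*\}$. Hence $\tilde{x}_0\notin\Vs$.

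The first claim then follows: $x_0^*\in\Vs$ by (i), but $x_0^*+\epsilon P_S a\notin\Vs$ for every sufficiently small $\epsilon>0$, so $x_0^*$ is not interior to \Vs , i.e.\ $x_0^*\in\DV$ (the case $\intV=\emptyset$ being trivial, since then $\Vs\subseteq\DV$). The step I expect to be the main obstacle is the neighborhood claim in the middle of the previous paragraph: "$\tilde{x}_0$ is viable" only supplies \emph{some} feasible trajectory, which could be far from $\{x_i^*\}$, whereas contradicting a merely \emph{local} optimum requires a \emph{nearby} one. Making this precise leans on continuity of the flow (Assumption~\ref{ass:dyn_differentiability}) and, for the terminal correction (and for keeping the perturbed states inside \Xs\ when $\{x_i^*\}$ touches $\partial\Xs$), on a local controllability property around the equilibrium $x_{N-1}^*$ — precisely the situation for fully-actuated manipulators, the paper's target class — after which the remaining estimates are routine.
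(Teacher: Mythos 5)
Your proposal follows essentially the same route as the paper's proof: perturb the optimal initial state along $P_S a$, use $S P_S = 0$ to show the constraint $S x_0 = s$ is preserved, use $a^\T P_S a = \|P_S a\|^2 > 0$ to get a strict cost increase, and use viability of $\tilde{x}_0$ together with the large-$N$ assumption to conclude that $\tilde{x}_0$ would be a feasible initial condition with better cost, contradicting optimality; your ordering (prove $\tilde{x}_0 \notin \Vs$ first, then deduce $x_0^* \in \DV$ from viability of $x_0^*$) replaces the paper's two-case split on whether $\tilde{x}_0 \in \Xs$, but this is cosmetic since $\tilde{x}_0 \in \Vs$ already forces $\tilde{x}_0 \in \Xs$.

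The one substantive divergence is your middle step: you correctly observe that viability of $\tilde{x}_0$ only yields \emph{some} feasible trajectory, possibly far from $\{x_i^*\}_0^N$ in decision space, whereas contradicting \emph{local} optimality needs a nearby feasible point, and you sketch a repair (roll out $\{u_i^*\}$ from $\tilde{x}_0$, Gr\"onwall bound, terminal correction onto the equilibrium). The paper does not address this at all — it simply asserts that feasibility of $\tilde{x}_0$ with better cost contradicts local optimality — so you are ahead of, not behind, the published argument in spotting the issue. Be aware, though, that your patch as sketched does not go through under the lemma's stated hypotheses: the perturbed rollout must also satisfy $x_i \in \Xs$, which is exactly what fails when the optimal trajectory rides $\DX$ (the typical situation here, as Theorem~\ref{theorem:boundary_V} and the algorithm make clear), and the "small terminal correction" onto $x_{N-1}^*$ requires a local controllability property that is not assumed (Assumption~\ref{ass:dyn_differentiability} gives only differentiability); it holds for the fully-actuated manipulators the paper targets, but not for the general system the lemma is stated for. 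So your core proof matches the paper's, and your additional construction is a reasonable but incomplete attempt to close a gap the paper itself leaves open.
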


\begin{proof}
We split this proof in two cases: when $\tilde{x}_0 \notin \Xs$ and when $\tilde{x}_0 \in \Xs$.
In the first case, $\tilde{x}_0 \notin \Xs$ implies $\tilde{x}_0 \notin \Vs$. 
Moreover, we know by definition of~\eqref{eq:viab_boundary_ocp} that $x_0^* \in \Vs$.
Since $\tilde{x}_0$ and $x_0^*$ can be arbitrarily close, then we can infer $x_0^* \in \DV$.
In the second case ($\tilde{x}_0 \in \Xs$), we can prove this lemma by contradiction. We suppose that $x_0^* \not \in \DV$, which implies $\tilde{x}_0 \in \Vs$ for a sufficiently small $\epsilon$, and we show that this leads to the conclusion that $x_0^*$ is not a local optimum.
If $x_0^* \not \in \DV$ then we know that $x_0^* \in \intV$. 
Together with the fact that $\tilde{x}_0 \in \Xs$, this means that $\tilde{x}_0 \in \Vs$ for any sufficiently small $\epsilon >0$. It is easy to verify that $\tilde{x}_0$ satisfies the initial conditions of~\eqref{eq:viab_boundary_ocp}, indeed:
\begin{equation}
    S \tilde{x}_0 = S x_0^* + \epsilon S P_S a = S x_0^* = s,
\end{equation}
where we have exploited the fact that $S P_S = 0$.
If $\tilde{x}_0$ is viable, by the assumption that $N$ be sufficiently large, it must be possible to satisfy also the terminal conditions of~\eqref{eq:viab_boundary_ocp}, i.e., to reach an equilibrium state. Finally, $\tilde{x}_0$ gives a better cost for~\eqref{eq:viab_boundary_ocp} than $x_0^*$ because:
\begin{equation}
    a^\T \tilde{x}_0 = a^\T x_0^* + \epsilon a^\T P_S a > a^\T x_0^*,
\end{equation}
where we have exploited the fact that $a^\T P_S a > 0$ because all null-space projectors (as $P_S$) are positive semi-definite and $P_S a \neq 0$ by assumption. 
In conclusion, since using $\tilde{x}_0$ as initial state it is possible to satisfy all the constraints of~\eqref{eq:viab_boundary_ocp}, while achieving a better cost, this implies that $\{x_i^* \}_{0}^N$ be not a local optimum. Therefore, if $\{x_i^* \}_{0}^N$ is a local optimum, it must hold that $x_0^* \in \DV$ and $\tilde{x}_0 \notin \Vs$.
\end{proof}

% 1) Calcolare V per sistemi nonlineari è difficile. DP/HJB non scala. Active Learning con OCP è bello, ma sarebbe ancora più bello poter calcolare traiettorie direttamente su dV.
% 2) Lemma 1: lo stato iniziale della soluzione del reverse-time OCP è sempre su dV
% 3) Lemma 2: se parti da dV non puoi entrare in V senza prima aver toccato dX
% 4) Theorem 1: usando Lemma 1 e 2,  possiamo dedurre quali tratti della soluzione del reverse-time OCP sono su dV, integrando la dinamica partendo da una versione perturbata di x0*
% 5) Lemma 3: considerando un robot manipolatore, V è stellato
% 6) Istanza del reverse-time OCP del lemma 2, specifico per il robot manipolatore
% 7) Rappresentazione di V mediante rete neurale che sfrutta il Lemma 3
% 8) Algoritmo per il calcolo di V di un manipolatore che sfrutta tutti i risultati sopra
\qting{quote:explain_lemma_2}{Lemma~\ref{lemma:viab_boundary_ocp} ensures that using problem~\eqref{eq:viab_boundary_ocp} gives us trajectories that always start from \DV . 
However, the remaining $N$ states (from $x_1^*$ to $x_N^*$) could belong to \intV.
Ideally, we would like to compute trajectories that are entirely on \DV .
While this is not guaranteed, we argue that often some parts of $\{ x_i^* \}_1^N$ are on \DV , and we provide a simple method to check when this is the case.
To this aim, we start by showing that, under certain conditions, a viable state trajectory that starts on \DV , remains on \DV\ as long as these conditions are met.}

\begin{lemma} \label{lemma:boundary_V}
Given a state $x\in \DV$, a control $u \in \Us$, and a state direction $d \in \Rv{n}, ||d||=1$, such that for any sufficiently small $\epsilon > 0$:
\begin{equation}
\label{eq:ass_lemma_V_boundary}
\tilde{x} \triangleq x + \epsilon \, d \in \Xs \setminus \Vs,
\end{equation}
% Given a state $x\in \DV$ and a control $u \in \Us$ satisfying the following assumption:
% \begin{equation}
% \label{eq:ass_lemma_V_boundary}
% \exists \, d \in \Rv{n}, ||d||=1, \bar{\epsilon} > 0: \forall \, \epsilon \in \, ] 0, \bar \epsilon], \, \tilde{x} \triangleq x + \epsilon d \in \Xs \setminus \Vs
% \end{equation}
then we have that:
\begin{equation}
    x^+ = f(x, u) \not\in \intV .
\end{equation}
\end{lemma}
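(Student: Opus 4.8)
\emph{Proof sketch.} The plan is to argue by contradiction, combining two ingredients. The first is the continuity of $f(\cdot, u)$, which follows from Assumption~\ref{ass:dyn_differentiability} since differentiability with respect to $x$ implies continuity in $x$. The second is that membership in \Vs\ propagates backward through a single admissible transition: if $\tilde{x} \in \Xs$, the applied control lies in \Us, and the resulting next state is viable, then $\tilde{x}$ itself is viable. Concretely, I would assume $x^+ = f(x,u) \in \intV$ and show that the perturbed state $\tilde{x}$ of~\eqref{eq:ass_lemma_V_boundary} is then viable, contradicting $\tilde{x} \in \Xs \setminus \Vs$.

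First I would exploit that \intV\ is open and $f(\cdot,u)$ is continuous at $x$: since $f(\tilde{x}, u) = f(x + \epsilon d, u) \to f(x,u) = x^+$ as $\epsilon \to 0^+$, there exists $\bar\epsilon_1 > 0$ such that $f(\tilde{x},u) \in \intV \subseteq \Vs$ whenever $0 < \epsilon < \bar\epsilon_1$. Next I would assemble an admissible infinite-horizon trajectory out of $\tilde{x}$: by~\eqref{eq:ass_lemma_V_boundary} there exists $\bar\epsilon_2 > 0$ with $\tilde{x} \in \Xs$ for $0 < \epsilon < \bar\epsilon_2$, the control $u$ lies in \Us\ by hypothesis, and $f(\tilde{x},u) \in \Vs$ supplies, by definition of the viability kernel, a control sequence keeping the state in \Xs\ with controls in \Us\ at all future times along the trajectory starting at $f(\tilde{x},u)$. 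Prepending the pair $(\tilde{x}, u)$ to that trajectory yields an infinite admissible trajectory starting at $\tilde{x}$, so $\tilde{x} \in \Vs$. Taking any $0 < \epsilon < \min(\bar\epsilon_1, \bar\epsilon_2)$ then contradicts $\tilde{x} \notin \Vs$; hence $x^+ \notin \intV$.

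The only subtle step --- and the one I would write out with care --- is reconciling the two ``sufficiently small $\epsilon$'' quantifiers: the hypothesis on $\tilde{x}$ and the continuity-based conclusion on $f(\tilde{x},u)$ each hold only on an interval $(0,\bar\epsilon_j)$, so one must intersect the intervals before extracting the contradiction. Everything else is a routine unfolding of the definitions of \intV, \Vs, and the viability kernel; in fact the assumption $x \in \DV$ is barely used, since~\eqref{eq:ass_lemma_V_boundary} already forces $x \notin \intV$ and the argument goes through as soon as $\tilde{x} \in \Xs \setminus \Vs$.
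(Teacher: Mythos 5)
Your proof is correct and is essentially the paper's argument in contrapositive form: the paper propagates non-viability forward (since $\tilde{x} \in \Xs \setminus \Vs$, its successor $\tilde{x}^+ = f(\tilde{x},u)$ cannot be in \Vs) and uses continuity of $f$ in $x$ (via Taylor expansion under Assumption~\ref{ass:dyn_differentiability}) to note that $\tilde{x}^+$ gets arbitrarily close to $x^+$, which rules out $x^+ \in \intV$; you instead assume $x^+ \in \intV$, use the same continuity to place $f(\tilde{x},u)$ in \Vs, and propagate viability backward by prepending $(\tilde{x},u)$, reaching a contradiction with $\tilde{x} \notin \Vs$. Same two ingredients and same one-step propagation idea, so no substantive difference; your explicit handling of the two ``sufficiently small $\epsilon$'' intervals is a welcome bit of extra care.
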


\begin{proof}
Since $\tilde{x}$ is in \Xs\ but not in \Vs, any state reachable from $\tilde{x}$ cannot be in \Vs ; therefore we can write:
\begin{equation}
\label{eq:proof_lemma_V_boundary}
\begin{aligned}
    \tilde{x}^+ & = f(\tilde{x}, u) = \\
    & = f(x,u) + \epsilon \, \partial_x f(x,u) d + O(\epsilon^2) = \\
    & = x^+ + \epsilon \, \partial_x f(x,u) d + O(\epsilon^2) \not \in \Vs .
\end{aligned}
\end{equation}
Since $\epsilon$ can be arbitrarily close to zero, and the eigenvalues of $\partial_x f(x, u)$ are bounded (Assumption~\ref{ass:dyn_differentiability}), this implies that $\tilde{x}^+$ can be arbitrarily close to $x^+$. Since $\tilde{x}^+ \not \in \Vs$, we can infer that $x^+$ can either be outside \Vs , or on its boundary, but not in its interior.
\end{proof}

\qting{quote:corollary_1_motivation}{\rev{To better understand assumption~\eqref{eq:ass_lemma_V_boundary}, let us introduce a Corollary,} which is a special case of Lemma~\ref{lemma:boundary_V}. 
}
\qting{quote:explain_corollary_1}{\rev{This Corollary states that if a trajectory starts on \DV , it cannot reach \intV\ before reaching \DX . 
}}

\begin{corollary} \label{corollary:boundary_V}
Given \rev{a control $u \in \Us$ and a state $x\in \DV$} satisfying the following assumption:
\begin{equation}
\label{eq:ass_corollary_boundary_V}
x \not\in \DX ,
\end{equation}
then we have that: $x^+ = f(x, u) \notin \intV$.
\end{corollary}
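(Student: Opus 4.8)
The plan is to derive this corollary as a direct consequence of Lemma~\ref{lemma:boundary_V}. The only gap to bridge is that Lemma~\ref{lemma:boundary_V} requires the existence of a unit direction $d$ such that $x + \epsilon d \in \Xs \setminus \Vs$ for all sufficiently small $\epsilon > 0$, whereas the corollary only hypothesizes that $x \notin \DX$ (together with $x \in \DV$). So the main task is: from $x \in \intX$ and $x \in \DV$, produce such a direction $d$.

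First I would observe that since $x \notin \DX$ and $x \in \Xs$, the point $x$ lies in the interior of $\Xs$; hence there is an open ball $B(x, r) \subseteq \Xs$ for some $r > 0$, and in particular every point sufficiently close to $x$ is in $\Xs$, regardless of direction. Second, because $x \in \DV$, by definition of boundary every neighborhood of $x$ contains points not in $\Vs$; combined with $B(x,r) \subseteq \Xs$, this means every neighborhood of $x$ contains points of $\Xs \setminus \Vs$. I would then extract a sequence $y_k \to x$ with $y_k \in \Xs \setminus \Vs$, write $y_k = x + \epsilon_k d_k$ with $\epsilon_k = \|y_k - x\| \to 0$ and $\|d_k\| = 1$, and pass to a convergent subsequence $d_k \to d$ (compactness of the unit sphere). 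The candidate direction is this limit $d$.

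The step I expect to be the main obstacle is showing that this limiting direction $d$ actually satisfies $x + \epsilon d \in \Xs \setminus \Vs$ for \emph{all} sufficiently small $\epsilon > 0$ — the hypothesis of Lemma~\ref{lemma:boundary_V} is a statement along a whole ray, not just along a sequence. The membership in $\Xs$ is immediate from $x \in \intX$. The delicate part is non-membership in $\Vs$: a priori, the non-viable points $y_k$ approach $x$ only along the discrete sequence, and nothing forces the open ray segment $\{x + \epsilon d : 0 < \epsilon < \delta\}$ to avoid $\Vs$. Resolving this cleanly likely requires an additional structural assumption (e.g., that $\Xs \setminus \Vs$ is, near $x$, locally described by a one-sided condition, or that $\DV$ is locally a smooth hypersurface so that the non-viable side is a genuine half-neighborhood); alternatively, one can weaken Lemma~\ref{lemma:boundary_V} to require only a sequence $\epsilon_k \to 0$ with $x + \epsilon_k d \in \Xs \setminus \Vs$, which is exactly what the Taylor-expansion argument in its proof actually uses — the eigenvalue-boundedness bound on $\partial_x f$ makes $\tilde{x}^+$ arbitrarily close to $x^+$ along that same subsequence, and since each $\tilde{x}^+ \notin \Vs$, the limit $x^+$ cannot lie in $\intV$.

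Assuming the sequential version of Lemma~\ref{lemma:boundary_V} (or the extra regularity needed for the ray version), the conclusion $x^+ = f(x,u) \notin \intV$ then follows immediately by invoking that lemma with the direction $d$ constructed above, for the given control $u \in \Us$. I would close by noting that the corollary is "easier to understand" precisely because the hypothesis $x \notin \DX$ is a clean geometric condition that automatically supplies a non-viable direction at a boundary point, without the user having to exhibit $d$ explicitly.
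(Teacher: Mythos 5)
Your proposal follows essentially the same route as the paper: the corollary is obtained as a special case of Lemma~\ref{lemma:boundary_V}, using $x \in \DV$ together with $x \notin \DX$ to produce a perturbation direction $d$ pointing into $\Xs \setminus \Vs$. You are in fact more careful than the paper's one-line argument, which simply asserts that such a ray direction exists; your remark that closing this step needs either mild regularity of $\DV$ or only a sequential weakening of Lemma~\ref{lemma:boundary_V} (which is all that its Taylor-expansion proof actually uses) is a correct and sharper reading of the same idea.
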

\begin{proof}
This corollary is a special case of Lemma~\ref{lemma:boundary_V} because~\eqref{eq:ass_corollary_boundary_V} implies~\eqref{eq:ass_lemma_V_boundary}. Indeed, if $x \in \DV$ and $x \not\in \DX$, then there must exist a direction $d \in \Rv{n}$ in which $x$ can be perturbed with an arbitrarily small magnitude $\epsilon$, so that it leaves \Vs\ without leaving \Xs , which is what~\eqref{eq:ass_lemma_V_boundary} states.
\end{proof}

% In other words, Corollary~\ref{corollary:boundary_V} states that if a trajectory starts on  \DV , it cannot reach \intV\ before reaching \DX . 
Lemma~\ref{lemma:boundary_V} states something similar \rev{to Corollary~\ref{corollary:boundary_V}}, but clarifying that actually reaching \DX\ is necessary but not sufficient to reach \intV . The real condition to be met is indeed~\eqref{eq:ass_lemma_V_boundary}.
\qting{quote:explain_theorem_1}{\rev{The next Theorem exploits Lemma~\ref{lemma:boundary_V} to suggest} a simple method to verify whether the optimal states $\{ x_i^* \}_1^N$, computed by solving \eqref{eq:viab_boundary_ocp}, are on \DV .}

\begin{theorem} \label{theorem:boundary_V}
Let us consider a locally-optimal state trajectory $\{x_i^* \}_{0}^N$ computed by solving~\eqref{eq:viab_boundary_ocp}. 
Let us assume that $P_S a \neq 0$, where $P_S \triangleq (I - S^\pinv S)$ is a null-space projector of $S$. 
Let us assume that $N$ is sufficiently large to reach $\mathcal{S}$ from any state in \Vs .
% and that the state trajectory obtained by locally perturbing $x_0^*$ in direction $a$, belongs to \Xs\ up to a certain time step $k \in [0, N]$.
Consider the following definition of a perturbed state trajectory:
\begin{equation}
\label{eq:ass_theorem_boundary_V_perturbed}
\begin{aligned}
\tilde{x}_0 &= x_0^* + \epsilon P_S a ,\\
\tilde{x}_{i+1} &= f(\tilde{x}_i, u_i^*) .
\end{aligned}
\end{equation}
Let us assume that for any sufficiently small $\epsilon > 0$ we have:
\begin{equation}
\label{eq:ass_theorem_boundary_V}
\tilde{x}_i \in \Xs \qquad i = 0, \dots, k-1 ,
\end{equation}
for a certain time step $k \in [0, N]$. Then we have:
\begin{equation}
    x_i^* \in \DV \qquad i=0, \dots, k . 
\end{equation}
\end{theorem}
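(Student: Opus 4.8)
The plan is to argue by induction on $i$, combining Lemma~\ref{lemma:viab_boundary_ocp} for the base case with a repeated application of Lemma~\ref{lemma:boundary_V} for the inductive step. First, I would establish the base case $i=0$: Lemma~\ref{lemma:viab_boundary_ocp} already gives $x_0^* \in \DV$ directly, since its hypotheses ($P_S a \neq 0$ and $N$ sufficiently large) coincide with those assumed here. In fact Lemma~\ref{lemma:viab_boundary_ocp} gives more, namely $\tilde{x}_0 = x_0^* + \epsilon P_S a \notin \Vs$ for all sufficiently small $\epsilon > 0$, and this is exactly the ``escape direction'' information needed to feed into Lemma~\ref{lemma:boundary_V}.

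Next I would set up the induction. The inductive hypothesis at step $i$ (for $i \le k-1$) would be the conjunction of two facts: (a) $x_i^* \in \DV$, and (b) $\tilde{x}_i \in \Xs \setminus \Vs$ for all sufficiently small $\epsilon > 0$. Assuming this, I want to deduce (a) and (b) at step $i+1$. For (a): apply Lemma~\ref{lemma:boundary_V} with $x = x_i^*$, $u = u_i^*$, and the perturbation direction $d$ taken (after normalization) along $\tilde{x}_i - x_i^*$; hypothesis (b) is precisely~\eqref{eq:ass_lemma_V_boundary}, so the lemma yields $x_{i+1}^* = f(x_i^*, u_i^*) \notin \intV$. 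Since $x_{i+1}^*$ is part of a feasible (indeed optimal) trajectory of~\eqref{eq:viab_boundary_ocp}, it is viable, $x_{i+1}^* \in \Vs$; combined with $x_{i+1}^* \notin \intV$ this forces $x_{i+1}^* \in \DV$. For (b) at step $i+1$: by the Taylor expansion as in~\eqref{eq:proof_lemma_V_boundary}, $\tilde{x}_{i+1} = f(\tilde{x}_i, u_i^*) = x_{i+1}^* + \epsilon\,\partial_x f(x_i^*, u_i^*)(\tilde{x}_i - x_i^*)/\epsilon \cdot \epsilon + O(\epsilon^2)$ stays arbitrarily close to $x_{i+1}^*$ (eigenvalues bounded, Assumption~\ref{ass:dyn_differentiability}); since any state reachable from $\tilde{x}_i \notin \Vs$ is itself not in \Vs, we get $\tilde{x}_{i+1} \notin \Vs$, and $\tilde{x}_{i+1} \in \Xs$ is guaranteed by hypothesis~\eqref{eq:ass_theorem_boundary_V} as long as $i+1 \le k-1$. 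The induction thus propagates fact (b) up to index $k-1$ and fact (a) up to index $k$, which is the claim.

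The main obstacle I anticipate is bookkeeping the role of $\epsilon$ and the $\Xs$-membership range carefully. Fact (b) requires $\tilde{x}_{i+1} \in \Xs$, which~\eqref{eq:ass_theorem_boundary_V} only guarantees up to index $k-1$; so the propagation of (b) legitimately stops at $i = k-1$, which is exactly why (a) is only claimed up to $i = k$ (the last step uses $\tilde{x}_{k-1}$ as the escape witness to conclude $x_k^* \in \DV$, but we no longer need $\tilde{x}_k \in \Xs$). A second subtlety is that ``sufficiently small $\epsilon$'' may shrink at each step; since there are only finitely many steps ($k \le N$), taking the minimum over $i = 0, \dots, k-1$ of these thresholds gives a single valid $\epsilon$-range, so this causes no real trouble. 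One should also note that the direction $d_i \triangleq (\tilde{x}_i - x_i^*)/\|\tilde{x}_i - x_i^*\| = P_S a / \|P_S a\|$ for $i=0$, and for $i>0$ is $\partial_x f(\dots) d_{i-1}/\|\cdot\|$ up to higher-order terms — but Lemma~\ref{lemma:boundary_V} is stated for ``a direction $d$ such that~\eqref{eq:ass_lemma_V_boundary} holds,'' and hypothesis (b) gives exactly such a $d$ without our needing to track its explicit form, so the argument goes through cleanly.
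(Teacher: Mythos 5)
Your proposal is correct and follows essentially the same route as the paper's proof: initialize with Lemma~\ref{lemma:viab_boundary_ocp} ($x_0^* \in \DV$, $\tilde{x}_0 \notin \Vs$), then iterate Lemma~\ref{lemma:boundary_V}, upgrading $x_{i+1}^* \notin \intV$ to $x_{i+1}^* \in \DV$ via feasibility of the optimal trajectory, and propagating $\tilde{x}_{i+1} \in \Xs \setminus \Vs$ from $\tilde{x}_i \in \Xs \setminus \Vs$ together with assumption~\eqref{eq:ass_theorem_boundary_V}. Your explicit handling of the index bookkeeping (stopping the propagation at $k-1$ while concluding membership in $\DV$ up to $k$) and of the uniform choice of $\epsilon$ over finitely many steps only makes explicit what the paper leaves implicit.
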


\begin{proof}
The key idea of this proof is to iteratively apply Lemma~\ref{lemma:boundary_V} to show that $x_{i+1}^* \in \DV$, starting from the knowledge that $x_i^* \in \DV$, $u_i^* \in \Us$, and $\tilde{x}_i \in \Xs \setminus \Vs$.
We initialize the proof by exploiting Lemma~\ref{lemma:viab_boundary_ocp}, which states that $x_0^* \in \DV$ and $\tilde{x}_0 \notin \Vs$. 
Considering also assumption~\eqref{eq:ass_theorem_boundary_V} and the obvious fact that $u_i^* \in \Us$, $\forall \, i$, we have all the conditions to apply Lemma~\ref{lemma:boundary_V} for $i=0$.
Lemma~\ref{lemma:boundary_V} tells us only that $x_{i+1}^* \notin \intV$.
However, since all the optimal states must be viable by definition of~\eqref{eq:viab_boundary_ocp}, we can infer $x_{i+1}^* \in \DV$.
To iterate the application of Lemma~\ref{lemma:boundary_V} we need to show that $\tilde{x}_{i+1} \in \Xs \setminus \Vs$. 
Assumption~\eqref{eq:ass_theorem_boundary_V} ensures that $\tilde{x}_{i+1} \in \Xs$. 
The fact that $\tilde{x}_{i+1} \notin \Vs$ is instead a consequence of $\tilde{x}_i \in \Xs \setminus \Vs$ because, by definition of \Vs , we cannot reach \Vs\ from the outside without violating a constraint.
\end{proof}

Theorem~\ref{theorem:boundary_V} provides us with the theoretical foundations to design an iterative algorithm for numerically approximating \Vs{} in Section~\ref{sec:algorithm}. However, before we do that, the next section analyzes an interesting property of viability kernels of robotic manipulators, which we exploit to customize our algorithm.

% THIS IS A SHORTER BUT LESS CLEAR VERSION OF THE PROOF ABOVE
% \begin{proof}
% By Lemma~\ref{lemma:viab_boundary_ocp} we know that $x_0^* \in \DV$ and $\tilde{x}_0 \notin \Vs$.
% Then Lemma~\ref{lemma:boundary_V} tells us that $x_1^* \notin \intV$. 
% Since we know that $x_1^* \in \Vs$ because from $x_1^*$ we can reach an equilibrium state, we can easily infer: $x_1^* \in \DV$.
% Finally, the fact that $\tilde{x}_0 \in \Xs \setminus \Vs$ implies that $\tilde{x}_1 \notin \Vs$ because, by definition of \Vs , we cannot enter inside \Vs\ from the outside without violating a constraint.
% Therefore, we can repeat the same reasoning by iteratively applying Lemma~\ref{lemma:boundary_V} to prove that $x_i^* \in \DV, \,\, \forall \, i=2, \dots, k$.
% \end{proof}

\section{Viability for robot manipulators}
% This section focuses on the derivation of a useful property of the viability kernel of robotic manipulators.
% The dynamics of robot manipulators has some properties that we can exploit to properly set up the OCP~\eqref{eq:viab_boundary_ocp} and the Neural Network used to encode \Vs .
\qting{quote:dynamics_tensor}{
Let us introduce the dynamics of a robot manipulator with $n_j$ DOFs \rev{, using an unconventional form for the velocity term~\cite{Hollerbach1983_paper}}:
}
\begin{equation} \label{eq:manipulator_dynamics}
    M(q) \ddot{q} + \dot{q}^\T C(q) \dot{q} + g(q) = u,
\end{equation}
where $q, \dot{q}, \ddot{q} \in \Rv{n_j}$ are the joint positions, velocities, and accelerations, $M \in \R{n_j}{n_j}$ is the positive-definite mass matrix, 
\qting{quote:coriolis_tensor}{
$C(q)\in \R{n_j}{n_j\times n_j}$ is the 3D tensor accounting for Coriolis and centrifugal effects
}, 
and $g(q) \in \Rv{n_j}$ are the gravity torques.
We assume that $q$, $\dot{q}$, and $u$ are bounded:
\begin{equation} \label{eq:manipulator_constraints}
\begin{aligned}
q \in \mathcal{Q} &\triangleq \{q \in \Rv{n_j} | q^{min} \le q \le q^{max} \} , \\
\dot{q} \in \dot{\mathcal{Q}} &\triangleq \{\dot{q} \in \Rv{n_j} | \dot{q}^{min} \le \dot{q} \le \dot{q}^{max} \} , \\
u \in \Us &\triangleq \{u \in \Rv{n_j} | u^{min} \le u \le u^{max} \},
\end{aligned}
\end{equation}
where we assume that $\dot{q}^{min}<0$ and $\dot{q}^{max}>0$.
\begin{assumption} \label{ass:torque_limits}
    Let us assume that the robot is sufficiently strong to compensate for gravity in any configuration:
    \begin{equation} \label{eq:ass_torque_limits}
    g(q) \in \Us \quad \forall \, q \in \mathcal{Q}.
\end{equation}
\end{assumption}
\qting{quote:explain_lemma_3}{
In the following lemma we show that, for this class of systems, \Vs\ is \emph{star-convex} with respect to the joint velocities. In other words, if a state $(q, \dot{q}) \in \Vs$, then all states $(q, \alpha \, \dot{q}) \in \Vs$ for $\alpha \in [0,1]$.
}
% The proposed approach is valid for arbitrary systems and it allows to generate trajectories that are, at least partially, on $\DV$. In order to approximate $V$ using these trajectories, e.g. by training a Neural Network, they need to be properly distributed to fully represent its boundaries. This can be achieved by a proper setting of OCP~\eqref{eq:ocp_classic}, which might depend on the system under analysis. % In the case of a robotic manipulator, we can introduce a further property that can help guiding both the OCPs and NN settings.

\begin{lemma} \label{lemma:starredkernel}
Let us consider a manipulator with dynamics~\eqref{eq:manipulator_dynamics} and constraints~\eqref{eq:manipulator_constraints}. Under Assumption~\ref{ass:torque_limits}, its viability kernel is starred with respect to the joint velocities.
\end{lemma}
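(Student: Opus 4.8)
The plan is to take any viable state $(q, \dot q) \in \Vs$ and any scaling factor $\alpha \in [0,1]$, and explicitly construct an admissible control trajectory showing that $(q, \alpha \dot q)$ is also viable — namely, by reusing (a time-scaled version of) the control trajectory that keeps $(q, \dot q)$ in $\Xs$ forever. Since $(q,\dot q) \in \Vs$, by Assumption~2 there is a finite-horizon trajectory from $(q,\dot q)$ reaching an equilibrium state in $\mathcal S$, after which the system stays put forever using the gravity-compensation torque $g(q_{\mathrm{eq}})$, which is admissible by Assumption~\ref{ass:torque_limits}. So it suffices to show that from $(q, \alpha\dot q)$ we can also reach an equilibrium.

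First I would observe that equilibria of a manipulator are exactly the states with $\dot q = 0$ (and $\ddot q = 0$), held by $u = g(q)$; by Assumption~\ref{ass:torque_limits} every $(q,0)$ with $q \in \mathcal Q$ is therefore an equilibrium in $\mathcal S$. Now consider the continuous-time trajectory $q(t)$ that the original control produces from $(q, \dot q)$, reaching some $(q_{\mathrm{eq}}, 0)$; I would introduce the \emph{path reparametrization} idea: the geometric path traced in configuration space is what matters, and one can traverse the same path more slowly. Concretely, define $\bar q(t) = q(\sigma(t))$ for a time-reparametrization $\sigma$ with $\dot\sigma(0) = \alpha$, $\dot\sigma \le 1$, chosen so that the required torques $\bar u(t) = M(\bar q)\ddot{\bar q} + \dot{\bar q}^\T C(\bar q)\dot{\bar q} + g(\bar q)$ stay within $\Us$. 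The key algebraic point is that the inertial and Coriolis terms scale like $\dot\sigma^2, \ddot\sigma$ and the path curvature, so slowing down (keeping $\dot\sigma \le 1$ and choosing $\ddot\sigma$ small) only \emph{shrinks} the dynamic torque contribution relative to the original feasible trajectory, while the gravity term $g(\bar q)$ stays on the same path and hence within $\Us$ by Assumption~\ref{ass:torque_limits}. Also $\bar q$ stays in $\mathcal Q$ since it traces the same path, and $\dot{\bar q}(t) = \dot\sigma(t)\, q'(\sigma(t))$ starts at $\alpha \dot q \in \dot{\mathcal Q}$ (it lies between $0$ and $\dot q$, so it respects the box constraints since $\dot q^{min}<0<\dot q^{max}$) and can be kept in $\dot{\mathcal Q}$ along the way.

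The cleanest instantiation, which I would actually write up, handles the constant-$\alpha$ case directly: reparametrize time by a constant factor $\alpha$, i.e. traverse the original path at $\alpha$ times the speed. Then $\dot{\bar q} = \alpha\, \dot q(\cdot)$ scales linearly (so stays in the box $\dot{\mathcal Q}$ for $\alpha\in[0,1]$), $\ddot{\bar q} = \alpha^2 \ddot q(\cdot)$, hence the inertial and Coriolis terms are multiplied by $\alpha^2 \le 1$, and the torque becomes $\bar u = \alpha^2(M\ddot q + \dot q^\T C \dot q) + g(\bar q) = \alpha^2(u_{\mathrm{orig}} - g(q_{\mathrm{orig}})) + g(\bar q)$, a convex-combination-like expression that lies in $\Us$ whenever $u_{\mathrm{orig}} \in \Us$ and $g(\cdot) \in \Us$ (using convexity of the box $\Us$ and Assumption~\ref{ass:torque_limits}). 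Since the slowed path eventually still reaches $(q_{\mathrm{eq}}, 0)$, the state $(q, \alpha\dot q)$ reaches an equilibrium and is thus viable. Finally I would note the discrete-time caveat: the statement is phrased for the discrete-time system $x_{i+1}=f(x_i,u_i)$, so I would either argue that the time-scaling argument carries over under the usual assumption that the discretization is fine enough, or work with the underlying continuous-time dynamics and invoke that $f$ is its time-discretization.

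I expect the main obstacle to be making the time-reparametrization rigorous in a way that simultaneously respects \emph{all three} constraint sets $\mathcal Q$, $\dot{\mathcal Q}$, $\Us$ along the entire (infinite-horizon) trajectory — in particular verifying that the velocity bound is never violated during the slow-down and that the argument is airtight at the endpoints where the original trajectory transitions into the equilibrium phase. The convexity of the box constraint sets and the gravity-compensation assumption are exactly what make the torque bound go through; the velocity and position bounds are comparatively easy because the slowed trajectory never overshoots the original one. The discrete-time/continuous-time bridge is a modeling subtlety rather than a deep difficulty.
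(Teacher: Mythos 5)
Your proposal is correct and follows essentially the same route as the paper: the constant-factor time-scaling $\tilde{q}(t)=q(\alpha t)$, giving $\dot{\tilde q}=\alpha\dot q$, $\ddot{\tilde q}=\alpha^2\ddot q$, so that the required torque $\alpha^2(M\ddot q+\dot q^\T C\dot q)+g(q)=\alpha^2 u_{\mathrm{orig}}+(1-\alpha^2)g(q)$ lies in $\Us$ by convexity of $\Us$ together with Assumption~\ref{ass:torque_limits}, while the position and velocity bounds are trivially preserved. The extra machinery you sketch (general reparametrizations $\sigma(t)$, the explicit equilibrium-reaching phase) is not needed, and the discrete/continuous caveat you flag is likewise glossed over in the paper's own proof, which argues directly in continuous time.
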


\begin{proof}
    If a state $(q_0, \dot{q}_0) \in \Vs$, it means there exists an infinite-time feasible trajectory starting with that state: $(q(t), \dot{q}(t)) \in \Xs$, $\forall \, t \ge 0$, with $q(0)=q_0$ and $\dot{q}(0)=\dot{q}_0$. 
    We now prove that all the states $(q_0, \alpha \dot{q}_0)$ are viable $\forall \, \alpha \in [0,1]$ by showing that the time-scaled trajectory $\tilde{q}(t) \triangleq q(\alpha t)$ is feasible. 
    To prove this, we exploit the time-scaling property of manipulator trajectories~\cite{Hollerbach1983_paper}.%\cite{Hollerbach1983}.
    
    The time-scaled trajectory trivially satisfies the joint position and velocity limits, so we only need to prove that it also satisfies the control constraints.
    The time-scaled joint velocities and accelerations are:
    \begin{equation}
        \dot{\tilde{q}}(t) = \alpha \dot{q}(\alpha t), \quad \ddot{\tilde{q}}(t) = \alpha^2 \ddot{q}(\alpha t).
    \end{equation}
    Substituting these expressions in the dynamics~\eqref{eq:manipulator_dynamics} we get:
    \begin{equation}
        \alpha^2 ( M(q) \ddot{q} + \dot{q}^\T C(q) \dot{q} ) + g(q) = \tilde{u}(\alpha),
    \end{equation}
    where we expressed the control inputs $\tilde{u}$ as a function of $\alpha$. 
    Since the original trajectory $q(t)$ is feasible by assumption, we know that $\tilde{u}(1) \in \Us$.
    Moreover, by assumption~\eqref{eq:ass_torque_limits}, we know that $\tilde{u}(0) \in \Us$.
    Finally, by convexity of \Us , we can infer that $\tilde{u}(\alpha) \in \Us$, $\forall \, \alpha \in [0, 1]$, proving the feasibility of the time-scaled trajectory and the viability of $(q_0, \alpha \dot{q}_0)$.
\end{proof}

%The dynamics of a robot manipulator can be expressed as $M(q) \Dot{q} + \dot{q}^\T c(q) \dot{q} + g(q) = \tau$. 
    %Given a feasible trajectory $\{x(t)\}$ with $0 \leq t \leq T$ and a time scaling function $r(t)=c t$, the corresponding rescaled trajectory $\{u(r), x(r)\}$ with $0 \leq r \leq c T$ and $c < 1$ is also feasible if the joint velocities and torques constraints are convex and contain zero (time-scaling property of manipulators trajectories~\cite{Hollerbach1983}).

\subsection{Star-convex Viability Set Representation} 
\label{ssec:set_representation}
In general, \Vs\ can be encoded with a non-parametric classifier, such as a feedforward NN $\phi(x): \Rv{n} \rightarrow \Rv{}$, which takes as input a state $x$ and gives as output a binary label (viable, unviable).
To train such a classifier, both positive (viable) and negative (unviable) examples are needed. 
However, so far we have focused on computing viable states on \DV , which are therefore all positive examples. 
We could use the perturbed states $\tilde{x}_i(\epsilon)$ (described in Theorem~\ref{theorem:boundary_V}) as negative examples, but choosing the proper value of $\epsilon$ could be hard. 
A too small value could lead to positive and negative samples that are too close, making the training of the classifier extremely difficult.
On the other hand, a too large value could lead to poor classification accuracy.
\qting{quote:samples_on_dv}{\rev{To avoid these issues, we suggest to exploit that our samples are on $\DV$, and that $\Vs$ is star-convex (Lemma~\ref{lemma:starredkernel}) to encode \Vs\ differently.}}

Rather than using a classifier, we could encode \Vs\ with a function $\phi(q, d): \Rv{n_j} \times \Rv{n_j} \rightarrow \Rv{}$ that takes as inputs the joint positions $q$, the joint velocity direction $d$ (with $||d||=1$), and computes the maximum viable joint velocity norm.
In other words, if $\gamma = \phi(q, d)$, then $(q, \gamma d) \in \DV$.
With this representation of \Vs , we have transformed the classification problem into a regression problem and we no longer need unviable states to learn \Vs . 
%The main benefit is that we no longer need unviable states to learn \Vs . 
%This is crucial because, in general, to claim that a state is unviable we should rely on the implicit assumption that our OCP solver be flawless. However, in practice, this is not the case, and we could end up erroneously labeling states as unviable, introducing errors in our process.

\subsection{Uniform Data Distribution}
\label{ssec:data_distribution}
Solving instances of OCP~\eqref{eq:viab_boundary_ocp} we can compute viable trajectories that are guaranteed to start from \DV. However, to use these trajectories to learn \Vs, they need to cover its surface as uniformly as possible. This could be hard if \Vs\ is non-convex (which is in general the case), since just maximising $a^\T x_{0}$ for uniformly random directions $a$ would not ensure a \rev{complete} coverage of \DV\ (e.g., see Fig.~\ref{fig:star_convex_set}). In this case, the resulting initial state distribution would depend on the shape of the set and it could result in an accumulation of data on sharper areas of the set boundary and absence of data in other areas.
\begin{figure}[tbp]
    \centering
    \includegraphics[width=0.35\textwidth]{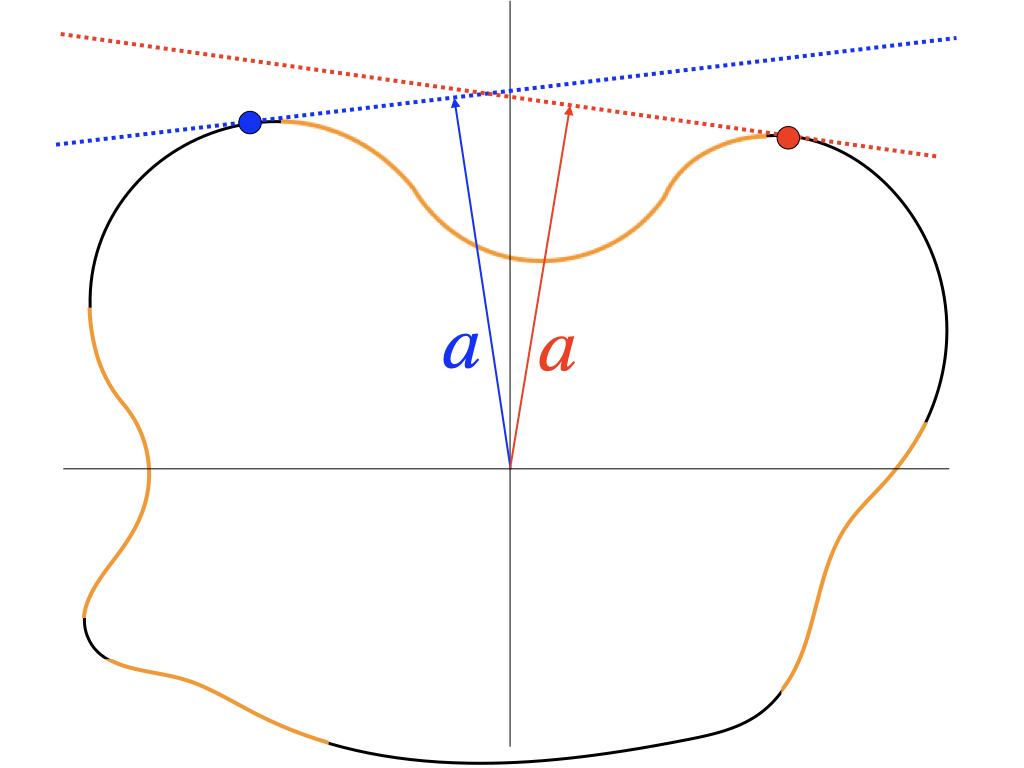}
    \caption{A star-convex set, with two possible examples of choices of $a$. The orange \rev{parts} of the set boundary cannot be discovered by any choice of $a$.}
    \label{fig:star_convex_set}
\end{figure}

\rev{Ensuring a uniform coverage of \DV\ does not seem possible without knowing its shape. However, our main concern is to ensure a uniform coverage of the input space of our set representation $\phi(\cdot)$, which is the space of joint angles and joint velocity directions. This is achieved by simply fixing} $q_0$ and the direction of $\dot{q}_0$ to uniformly random values $q^{init}$ and $d$, while maximizing $||\dot{q}_0||$. This is obtained by choosing:
\begin{equation}
\label{eq:compute_a_S_s}
\begin{aligned}
a = \mat{0 \\ d},
\quad
S = \mat{I & 0 \\ 0 & I - d d^\T},
\quad
s = \mat{q^{init} \\ 0} ,
\end{aligned}
\end{equation}
which leads to a cost that is orthogonal to $S$, ensuring the satisfaction of the assumption of Lemma~\ref{lemma:viab_boundary_ocp} ($P_S a \neq 0$).

This choice ensures a uniform distribution of the initial states. However, our method also exploits other states of the optimal trajectories to train the NN, whose distribution depends on the system dynamics. Using the strategy described above, we have observed an accumulation of data at lower velocities, where the trajectories converge to satisfy the terminal constraint. Empirically, we have observed that initializing one joint position at one of its bounds leads to a better coverage of $\DV$, because it allows for \rev{longer extreme trajectories}. The other joint positions are still uniformly randomized, as the initial joint velocity direction. Finally, to avoid trivial instances of the OCP, we ensure that the initial velocity direction of the joint that starts at its bound points away from it (e.g., if $q_0[i] = q_0^{max}[i]$ then $\dot{q}_0[i] <0$). 

% Note: for robotic manipulators, the equilibrium states are typically associated with the set of feasible states with zero velocities, given the condition that the joint torques are always sufficient to support the robot's weight, which is commonly satisfied. 

\section{Algorithmic Implementation}
\label{sec:algorithm}
This section presents the implementation of our algorithm, summarized by the pseudo-code in Algorithm~\ref{alg:ocp} and~\ref{alg:vboc}.

Our approach is to generate trajectories that are, at least partially, on $\DV$. OCP~\eqref{eq:viab_boundary_ocp} returns indeed a trajectory that, even if only locally-optimal, is guaranteed to start from $\DV$. However, this is true only if the horizon $N$ is sufficiently long. To ensure this is the case, we solve~\eqref{eq:viab_boundary_ocp} with an increasing value for $N$, starting from a (reasonable) initial guess, until $a^\T x_0^*$ converges. Algorithm~\ref{alg:ocp} describes this procedure.

To well approximate $\Vs$ using these states we use the data generation approach described in Section~\ref{ssec:data_distribution} (lines 3-6 of Alg.~\ref{alg:vboc}). 

After solving OCP~\eqref{eq:viab_boundary_ocp} (line 7 of Alg.~\ref{alg:vboc}), we must check which optimal states belong to $\DV$ and can therefore be added to the dataset $\mathcal{D}$. By Lemma~\ref{lemma:viab_boundary_ocp}, we know that $x_0^* \in \DV$, and, therefore, we can add it to $\mathcal{D}$ (line 8). Moreover, we know that all viable states on \DX\ are also on \DV\ (simply because $\Vs \subseteq \Xs$), so we could add them to $\mathcal{D}$. However, because \Vs\ is starred with respect to \dq\ (see Section~\ref{ssec:set_representation}), we only add the states on \DDQ\ (line 9). At position limits there could be multiple states on $\DV$ with the same joint position and velocity direction, but different velocity norms, therefore these states would be conflicting in our starred-set representation and should be discarded.
% Moreover, we add all states that are on \DDQ\ because they are also on \DV\ (line 9). We do not add the states on \DQ\ because of the star-convex shape of \Vs. 
To check if other states are on \DV, we exploit Theorem~\ref{theorem:boundary_V}. We compute the perturbed states $\tilde{x}_i$ as long as they belong to $\Xs$ (lines 10-13), and we add the associated optimal states to $\mathcal{D}$ (lines 14-15), discarding the states on \DX\, because they have already been considered (line 9). 

If the perturbed trajectory leaves \Xs\ at $\tilde{x}_{j-1}$, the associated optimal state $x^*_{j-1}$ must be on $\DX$ and, therefore, Lemma~\ref{lemma:boundary_V} tells us that the rest of the state trajectory could belong to $\intV$. To check if this is the case, we exploit again Lemma~\ref{lemma:starredkernel}, which tells us that the states on $\DV$, except for those at joint position limits, have maximum velocity norm for that position and velocity direction. So, when the optimal trajectory leaves $\DX$, say at $x^*_j \notin \DX$ (line 18), we solve another OCP~\eqref{eq:viab_boundary_ocp}, fixing the initial position and velocity direction to those of $x^*_j$ (lines 19-22). If this OCP returns the same initial velocity norm of $x^*_j$, then this proves that $x^*_j \in \DV$. If instead the OCP returns a higher initial velocity norm, it means that $x^*_j$ was in \intV, but it gives anyway a new trajectory starting from \DV\ that can be used in place of the previous one (lines 23-26). In both cases, $x^*_j$ can be added to $\mathcal{D}$ (line 27) and the whole process can continue.

%Exploiting the theoretical results of the previous sections we are therefore able to ensure the generation of extreme samples. 
% The overall algorithm is presented in Algorithm~\ref{alg:vboc}.

\begin{algorithm}[t]
\caption{Viability-Boundary Optimal Control (VBOC)}
\small
\begin{algorithmic}[1]
\Require 
Constraint sets $\mathcal{X}$ and $\mathcal{U}$, 
Dynamics $f(\cdot, \cdot)$,
Number of DOFs $n_j$,
Time horizon $N_{start}$,
Time horizon increment $n$,
OCP~\eqref{eq:viab_boundary_ocp},
Initial joint positions $q^{init}$,
Initial joint velocity direction $d$

    %\State $a, S, s \leftarrow \textsc{CostAndConstraints}(q, d)$
    \State $a, s, S \leftarrow \eqref{eq:compute_a_S_s}$
    % \State $a, s, S \leftarrow \textsc{Zeros}(2 n_j), \textsc{Zeros}(2 n_j), \textsc{Identity}(2 n_j, 2 n_j)$
    % \State $a[n_j:] \leftarrow d$
    % \State $s[:n_j] \leftarrow q$
    % \State $S[n_j:, n_j:] \leftarrow I - d d^\T$
    
    \State $N, \gamma \leftarrow N_{start}, 0$
    \Repeat
        \State $\{x_i^*\}_{0}^N, \{u_i^*\}_{0}^{N-1} \leftarrow \textsc{OCP}(\mathcal{X}, \mathcal{U}, f, N, a, S, s)$
        \State $\gamma_{previous}, N \leftarrow \gamma, N+n$
        \State $\gamma \leftarrow a^\T x_0^*$
    \Until{$\gamma > \gamma_{previous}$}

\State $\textbf{return } \{x_i^*\}_{0}^N, \{u_i^*\}_{0}^{N-1}, N, a$

\end{algorithmic}
\label{alg:ocp}
\end{algorithm}

\begin{algorithm}[t]
\caption{Compute states on \DV}
\small
\begin{algorithmic}[1]

\Require 
% State constraints $\mathcal{X}$, 
Constraint sets $\mathcal{Q}$ and $ \mathcal{\dot{Q}}$, 
%Control constraints $\mathcal{U}$, 
Number of DOFs $n_j$
Number of trajectories $K$,
Perturbation parameter $\epsilon$,
Time horizon $N_{guess}$

% \Statex $\triangleright$ Data generation:
\State $\mathcal{D} \leftarrow []$
\For{$k = 0 \rightarrow K$}
    %\State $j \leftarrow \textsc{RandomInteger}(0, n_j)$
    %\State $q[j] \leftarrow \textsc{RandomChoice}([q^{min}[j], q^{max}[j]])$
    %\For{$i = 0 \rightarrow n_j \; i \neq j$}
    %    \State $q[i] \leftarrow \textsc{RandomUniform}(q^{min}[i], q^{max}[i])$
    %\EndFor
    
    \State $q \leftarrow \textsc{RandomUniform}(q^{min}, q^{max})$
    \State $i \leftarrow \textsc{RandomInteger}(0, n_j)$
    \State $q[i] \leftarrow \textsc{RandomChoice}([q^{min}[i], q^{max}[i]])$
    \State $d \leftarrow \textsc{RandomVelocityDirection}(i, q)$
    \State $\{x_i^*\}_{0}^N, \{u_i^*\}_{0}^{N-1}, N, a \leftarrow \textsc{VBOC}(q, d, N_{guess})$
    
    \State $\textbf{insert } x_0^* \textbf{ in } \mathcal{D}$
    \For{\rev{$l = 1 \rightarrow N$}}
        \State \textbf{ if } $\dq_l^* \in \DDQ$ $\textbf{insert } x_l^* \textbf{ in } \mathcal{D}$
    \EndFor
    \State $\tilde{x}_0 \leftarrow x_0^* + \epsilon a$
    
    \For{$j = 1 \rightarrow N$}
        \If{$\tilde{x}_{j-1} \in \mathcal{X}$}
            \State $\tilde{x}_j \leftarrow f(\tilde{x}_{j-1}, u_{j-1}^*)$
            \State \textbf{if} $x_j^* \notin \DX$ $\textbf{insert } x_j^* \textbf{ in } \mathcal{D}$
        \Else
            \State $\tilde{x}_j \leftarrow \tilde{x}_{j-1}$
            \If{$x_j^* \notin \DX$}
                \State $\gamma \leftarrow a^\T x_j^*$
                \State $d \leftarrow \textsc{VelocityDirection}(x_j^*)$
                \State $q \leftarrow \textsc{JointPositions}(x_j^*)$
                \State $\{x_i^*\}_{j}^N, \{u_i^*\}_{j}^{N-1}, \sim, a \; \leftarrow \textsc{VBOC}(q, d, N-j)$
                \State $\gamma_{new} \leftarrow a^\T x_j^*$
                \If{$\gamma_{new} > \gamma$}
                    \For{$l = j+1 \rightarrow N$}
                        \State \textbf{if} $\dq_l^* \in \DDQ$ $\textbf{insert } x_l^* \textbf{ in } \mathcal{D}$
                    \EndFor
                \EndIf
                \State $\textbf{insert } x_j^* \textbf{ in } \mathcal{D}$
                \State $\tilde{x}_j \leftarrow x_j^* + \epsilon a$
            \EndIf
        \EndIf
    \EndFor
\EndFor
\Return $\mathcal{D}$
\end{algorithmic}
\label{alg:vboc}
\end{algorithm}

% commenti su ottimizzazioni varie: risimulare in caso di fallimento, escusione dati a bassa vel, min time
\section{Results}
To study the performance of our algorithm (VBOC) we test it with 2, 4 and 6-dimensional systems. 
\qting{quote:comparison}{
We compare VBOC with \rev{two state-of-the-art algorithms, focusing the comparison on the data-generation part, which is our main contribution.}}
\qting{quote:compare_algorithms}{\rev{The chosen algorithms are: i) the approach presented in Section~\ref{ssec:active_learning} relying on an informative-based Active Learning (AL) algorithm~\cite{Chakrabarty2020}}, and ii) a Hamilton-Jacoby Reachability (HJR) algorithm~\cite{Rubies2016}.} HJR is an approximate dynamic programming algorithm that computes the solution of the HJI PDE through recursive regression (since we are interested in infinite-time backward reachability, we discard the time dependency).

We evaluated the accuracy of the results by generating a test set using only the initial states obtained by calling Alg.~\ref{alg:ocp} with fully random initial position and velocity direction, to obtain \rev{well} distributed samples on $\DV$ \rev{(using the whole state trajectories would result in a higher density of samples at low velocities)}. On this set of $N$ points, we measured the Root Mean Squared Error (RMSE), defined as:
\begin{equation}
    \text{RMSE} = \sqrt{ \frac{1}{N}\sum_{i=0}^{N-1} \left( ||\dq_i|| - \phi \left(q_i, \frac{\dq_i}{|| \dq_i||} \right) \right)^2},
\end{equation}
where $\phi(\cdot, \cdot)$ is the NN trained by VBOC.
For AL and HJR, the trained NN is instead a classifier. Therefore, to measure the RMSE, we numerically identify (via binary search) the classifier boundary for the given joint positions $q_i$ and velocity direction $\dq_i / ||\dq_i||$.

In the tests we have run VBOC with $N_{start} = 100$, $\epsilon = 10^{-2}$ \rev{and solver tolerances equal to $10^{-3}$}. 

We have used fully-connected NNs composed of $3$ layers with ReLU activation functions. 
All algorithms are implemented\footnote{Our code is available at \url{github.com/idra-lab/VBOC}.} in Python, using ACADOS~\cite{Verschueren2019} for solving the OCPs and the PyTorch~\cite{Paszke2019} implementation of Adam~\cite{Kingma2014} for the NNs training. The tests are performed on a computer with 32 AMD\textregistered Ryzen9 5950x processors and a GeForce RTX 3060 GPU. The OCPs are solved in parallel on 30 cores and the NNs training is performed with CUDA on the GPU. 

\subsection{Tests on a 2D system}
The tested system is a simple pendulum, a model with a single swinging link connected to a fixed base through a revolute joint. The system has a 2-dimensional (2D) state space $x = [ q \; \dot{q} ]^\T$ and a 1-dimensional (1D) control input $u$. \rev{The joint positions, velocities and input constraints are in the form~\eqref{eq:manipulator_constraints} and are set to $\pi \pm \pi/4$ rad, $\pm 10$ rad/s, and $\pm 3$ Nm, respectively. }
With a 2D system, the computation of $\Vs$ is actually simpler than explained in Algorithm \ref{alg:vboc} since its boundary does not require sampling to be explored. It is sufficient to generate two trajectories with initial positions fixed at the two extremes $q^{min} / q^{max}$. 
%The trajectories are generated with $N_{start} = 100$ and $\epsilon = 10^{-3}$. The data are then used to train a fully-connected NN composed of $3$ layers with ReLU activation functions and 
The used NN has $100$ neurons in the hidden layer. The algorithm converged in $13\,$s and the resulting $\Vs$ approximation and the training data are shown in Fig.~\ref{fig:1p}. 

\begin{figure}[!tbp]
    \centering
    \includegraphics[width=0.49\textwidth]{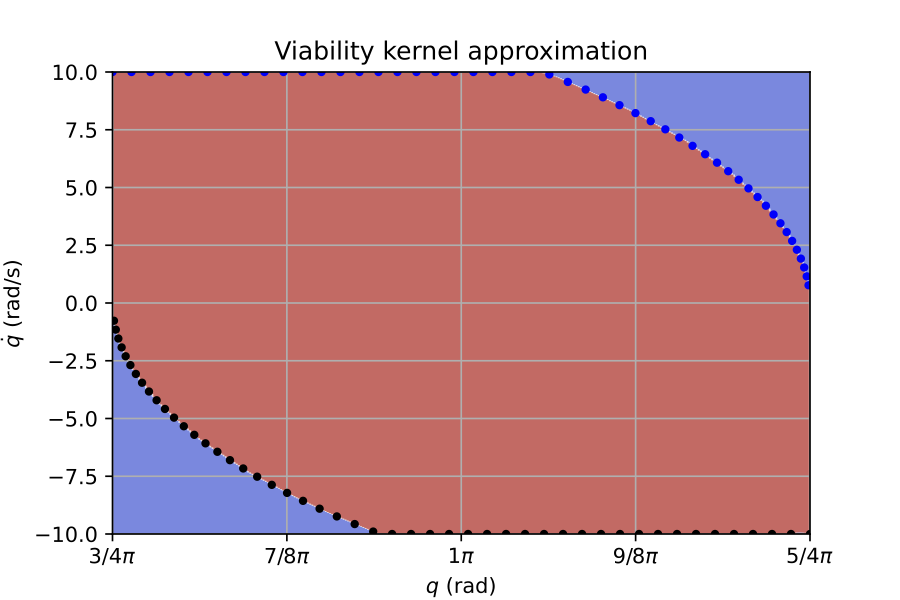}
    \caption{\rev{Viability kernel for the single pendulum. The background color represents the set learned using VBOC. The black and blue dots represent the training data from the two generated trajectories. The axis limits correspond to the joint position and velocity limits. }} %The white dashed frame represents the feasible set, in figure 1p.pdf.
    \label{fig:1p}
\end{figure}

To highlight the improvement with respect to the other state-of-the-art approaches, we compare the computational time and RMSE obtained using the same solver and NN complexity. AL and HJR have been executed on a grid with $100^2$ samples. Table~\ref{table:1prmse} shows the results. VBOC results to be faster and more accurate because the other approaches, having to sample the state space and train the NN multiple times due to their iterative structure, require more time to obtain good approximations. HJR performed worse than the others because, at each iteration, it relies on the NN trained at the previous iteration; this leads to an accumulation of the approximation error.

\begin{table}[!tbp]
\begin{center}
\caption{RMSE comparison for the 2D system.}
{\renewcommand\arraystretch{1.3}
\begin{tabular}{ cccc } 
\toprule
 & VBOC (ours) & AL & HJR\\ 
 \midrule
 \sc{Time} (s) & $13$ & $22$ & $117$\\ 
 %\hline
\sc{RMSE testing} (rad/s) & $0.0206$ & $0.0477$ & $0.3899$\\ 
 \bottomrule
\end{tabular}
}
\end{center}
\label{table:1prmse}
\end{table}

\subsection{Tests on a 4D system}
The tested system is now a double pendulum, an open kinematic chain with two swinging links and two planar revolute joints. The system has a 4D state space $x = [ q_1 \; q_2 \; \dot{q}_1 \; \dot{q}_2 ]^\T$ and a 2D control input $u = [ \tau_1 \; \tau_2 ]^\T$. \rev{The state constraints of each joint are the same of the previous test, the input constraints are now $\pm 10$ Nm. }

%To approximate $\Vs$, Algorithm \ref{alg:vboc} is executed with $K = 10^3$, $\epsilon = 10^{-3}$ and $N_{start} = 100$. The NN has $300$ neurons in the hidden layer. With these settings, the resulting RMSE is $0.2607\,$rad/s and was obtained in only $5\,$minutes. Better results could be achieved (at the cost of higher computational time) by increasing $K$ or the NN training iterations. 

In these tests we decided to compare the RMSE evolutions for the three algorithms while learning. VBOC, as presented in Alg.~\ref{alg:vboc}, is not incremental, but we can easily make it incremental by alternating between data generation and training. At each iteration we computed a batch of \rev{$K = 1000$} data points. The AL algorithm is executed it on a grid with $60^4$ samples \rev{and batches of $1000$ points}. For HJR, \rev{to converge in a reasonable amount of time (58 minutes)}, we had to use a smaller number of samples: $20^4$. For VBOC and AL we have used NNs with 300 neurons in the hidden layer. For HJR instead, since the NN is here used inside the OCPs, we observed that it performed better using only 100 neurons due to the reduced complexity of the OCPs.

Fig.~\ref{fig:error2pevol} shows the RMSE evolutions over time, where we can see that VBOC reaches higher accuracies much faster than the other algorithms. At each iteration, HJR has to solve an OCP for each point and to train an accurate NN. Even if the OCPs have a shorter (1-step) horizon, the higher number of OCPs and the training of the intermediate NNs limit the number of samples that can be used and, consequently, the final accuracy. AL and VBOC point instead at minimizing the number of solved OCPs, which allows them to be more efficient.
Table~\ref{table:2prmse} reports the final RMSE error for the three algorithms, which is \rev{3.4} times larger for AL than for VBOC. Moreover, Table~\ref{table:2prmse} also reports the RMSE of VBOC on the training set, which is only slightly better than on the test set, highlighting a good generalization capability of the trained NN. 
Finally, looking at the cumulative error distribution in Fig.~\ref{fig:cumerror2p} we can see that VBOC not only resulted in a smaller average error (RMSE), but also in a lower number of errors above any given threshold.

\begin{figure}[!tbp]
    \centering
    \includegraphics[width=0.49\textwidth]{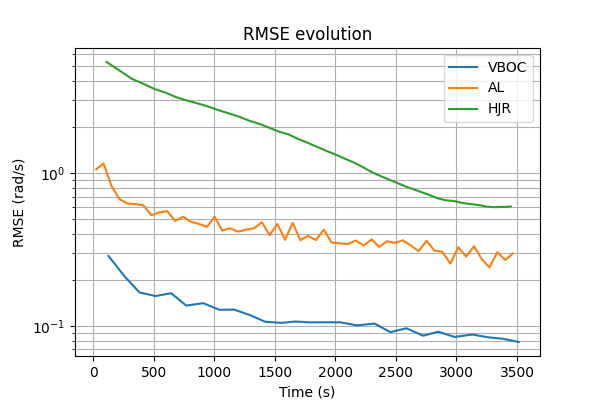}%gpu_training.png} %evol.pdf}
    \caption{\rev{Comparison between the RMSE evolution for the 4D system.}}
    \label{fig:error2pevol}
\end{figure}

\begin{table}[!tbp]
\begin{center}
\caption{RMSE comparison for the 4D system.}
{\renewcommand\arraystretch{1.3}
\begin{tabular}{ cccc } 
 \toprule
 & VBOC (ours) & AL & HJR\\ 
 \midrule
 %\sc{Time} (s) & 1816 & 1769 & 3301 \\
 \sc{RMSE testing} (rad/s) & \rev{$0.0782\,$} & \rev{$0.2693\,$}& \rev{$0.6002\,$} \\ 
  \sc{RMSE training} (rad/s) & \rev{$0.0636\,$} & -- & -- \\ 
 \bottomrule
\end{tabular}
}
\end{center}
\label{table:2prmse}
\end{table}
%\renewcommand{\arraystretch}{1}

%\begin{figure}[!tbh]
%    \centering
%    \includegraphics[width=0.49\textwidth]{figures/error2p.pdf}
%    \caption{Comparison between the prediction error distributions for the double pendulum with different %learning algorithms}
%    \label{fig:error2p}
%\end{figure}
\begin{figure}[!tbp]
    \centering
    \includegraphics[width=0.49\textwidth]{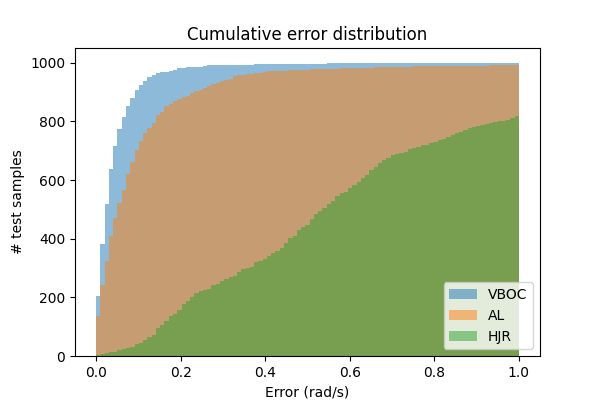}
    \caption{\rev{Cumulative error distribution for the 4D system at then end of the training. This plot shows how many test samples (y axis) obtained a prediction error (x axis) below a certain value.}}
    \label{fig:cumerror2p}
\end{figure}

\subsection{Tests on a 6D system}
To test the scalability of VBOC, we applied it also to a triple pendulum, which has a 6D state space $x = [ q_1 \; q_2 \; q_3 \; \dot{q}_1 \;\dot{q}_2 \; \dot{q}_3 ]^\T$ and a 3D control input $u = [ \tau_1 \; \tau_2  \; \tau_3 ]^\T$. \rev{The state and input constraints are the same as the previous test.} \qting{quote:RMSE_6D}{\rev{For this system we compare only VBOC and AL, since the curse of dimensionality of HJR resulted to be already too relevant. Indeed, HJR converged with an extremely high RMSE ($4.0$ rad/s) because we had to execute it on a grid of only $12^6$ points to make it converge within 6 hours.}}

\begin{figure}[!tbp]
    \centering
    \includegraphics[width=0.49\textwidth]{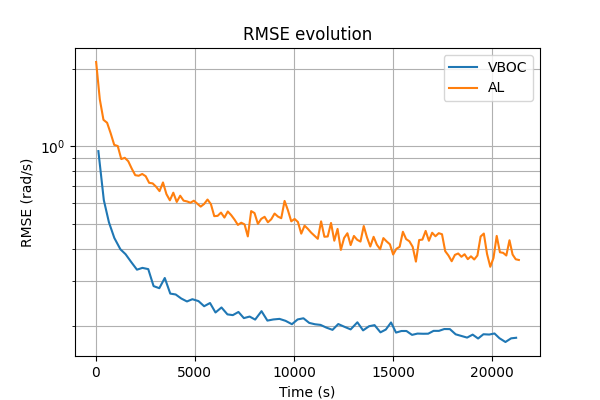}
    \caption{\rev{Comparison between the RMSE evolution for the 6D system.}}
    \label{fig:error3pevol}
\end{figure}

\begin{table}[!tbp]
\begin{center}
\caption{RMSE comparison for the 6D system.}
{\renewcommand\arraystretch{1.3}
\begin{tabular}{ ccc } 
 \toprule
 & VBOC (ours) & AL\\ 
 \midrule
 \sc{RMSE testing} (rad/s) & \rev{$0.1752\,$} & \rev{$0.3693\,$}\\ 
  \sc{RMSE training} (rad/s) & \rev{$0.0863\,$} & --  \\ 
 \bottomrule
\end{tabular}
}
\end{center}
\label{table:3prmse}
\end{table}

Table~\ref{table:3prmse} shows that \rev{the final RMSE obtained by VBOC is 2.2 times smaller than the one obtained with AL}. 
Fig.~\ref{fig:error3pevol} shows that the RMSE achieved by AL after \rev{6} hours is comparable to that achieved by VBOC after less than 30 minutes.
Even though the RMSE of VBOC on the training set is comparable to the one obtained for the 4D system, errors on the test set are larger, highlighting a lack of generalization. Smaller errors could be achieved by letting the algorithms run for more iterations, and/or using larger NN structures.

%since it limits its exploration to $\DV$, which lies in a $n_j-1$-dimensional space. 

%One advantage of our algorithm is that it explores directly $\DV$, which has dimension $n_j-1$, contrary to the other algorithms that require instead to explore the entire state space, which has dimension $n_j$. For the 2-d system, $\DV$ has only 1 dimension so it was sufficient to generate the two extreme trajectories. For the 4-d system, we first executed the algorithm with $K = 10^3$, so, to have an equivalent formulation of the problem, for the 6-d system we use $K = 10^5$ and an NN with $500$ hidden neurons. The result was now obtained in $8\,$hours and the final RMSE over the test set is $0.1726\,$rad/s.

\section{Conclusions}
This paper presented a new algorithm (VBOC) for the approximation of the viability kernel of robot manipulators. Differently from state-of-the-art approaches, VBOC computes directly states on the boundary of the set, leading to more accurate results in a data-efficient manner. The set boundary has indeed a smaller dimension than the set itself, so VBOC scales more favourably than algorithms that explore the entire state space. VBOC is theoretically guaranteed to provide data on the boundary of the set, requiring only local optimality of the OCP solutions. Moreover, contrary to many state-of-the-art methods, VBOC does not need to rely on the ability of the OCP solver to correctly detect unfeasible problems, which makes it robust to numerical errors. \rev{Additionally, since the trained NN introduces some approximation errors due to its intrinsic inability to exactly represent the set, the OCPs complexity can be reduced by relaxing the solver tolerances to be only slightly more accurate than the expected error on the set approximation.}

\asia{Despite all of this, many issues still remain open. For instance, scalability is still a major concern, since the algorithmic complexity still scales exponentially. To address this, we could use customized NN structures that embed our prior knowledge on the shape of \Vs\ (e.g., we know that the maximum viable velocity is null when a joint is at its bound).}
%In the future we would like to improve the accuracy of VBOC by using customized NN structures, exploiting our prior knowledge on the shape of \Vs\ (e.g., we know the maximum viable velocity is null when a joint is at its bound). 
Moreover, even if our tests have focused on joint-space constraints, we \rev{plan to} extend VBOC to Cartesian-space constraints, e.g., for obstacle avoidance. 
\qting{quote:mpc}{
Another interesting challenge is the use of the learned sets as terminal constraints in MPC (or safety filters in Reinforcement Learning). Since these sets are \emph{approximations} of \Vs , they are only \emph{approximately} control invariant, so recursive feasibility cannot be guaranteed in general. Therefore, we plan to investigate algorithmic approaches to use these sets, while maintaining strong guarantees of safety. \asia{Finally, while the theory in Section \ref{sec:vboc} holds for any differentiable system, our algorithm is specifically designed for star-convex sets; its extension to more generic cases is currently being investigated.}
}

%Future directions

\addtolength{\textheight}{-0cm}   % This command serves to balance the column lengths
                                  % on the last page of the document manually. It shortens
                                  % the textheight of the last page by a suitable amount.
                                  % This command does not take effect until the next page
                                  % so it should come on the page before the last. Make
                                  % sure that you do not shorten the textheight too much.

%%%%%%%%%%%%%%%%%%%%%%%%%%%%%%%%%%%%%%%%%%%%%%%%%%%%%%%%%%%%%%%%%%%%%%%%%%%%%%%%

\bibliographystyle{IEEEtran}
\bibliography{IEEEabrv,references}

\end{document}